\documentclass[nohyperref]{article}

\usepackage{graphicx}	
\usepackage[utf8]{inputenc}
\usepackage[english]{babel}
\usepackage{amsthm}
\usepackage{amsfonts}
\usepackage{dsfont}
\usepackage{algorithmic}
\usepackage[dvipsnames,table,xcdraw]{xcolor}
\usepackage[colorlinks=true, linkcolor=black, citecolor=blue]{hyperref}
\usepackage{microtype}
\usepackage{subfigure}
\usepackage{booktabs} 
\usepackage{amsmath,amssymb}
\usepackage{physics}
\usepackage[skip=0pt]{caption}

\usepackage{amsmath,amsfonts,bm}





\newcommand{\BlackBox}{\rule{1.5ex}{1.5ex}}  
\ifdefined\proof
    \renewenvironment{proof}{\par\noindent{\bf Proof\ }}{\hfill\BlackBox\\[2mm]}
\else
    \newenvironment{proof}{\par\noindent{\bf Proof\ }}{\hfill\BlackBox\\[2mm]}
\fi
 
\newtheorem{theorem}{Theorem}
\newtheorem{lemma}{Lemma} 
 
\newtheorem{remark}{Remark}

\newtheorem{definition}{Definition}

\usepackage[accepted]{style/icml2023}

\usepackage[capitalize,noabbrev]{cleveref}  

\crefname{proposition}{Prop.}{Props.}
\crefname{definition}{Def.}{Defs.}
\crefname{lemma}{Lemma}{Lemmas}
\crefname{example}{Ex.}{Exs.}
\crefname{equation}{}{}
\crefname{section}{\S\hspace{-0.2em}}{\S\S}
\crefname{appendix}{\S\hspace{-0.2em}}{\S\S}
\crefname{subsection}{\S\hspace{-0.2em}}{\S\S}
\crefname{subsubsection}{\S\hspace{-0.2em}}{\S\S}
\crefname{figure}{Fig.}{Figs.}
\crefname{wrapfigure}{Fig.}{Figs.}
\crefname{corollary}{Cor.}{Cors.}
\crefname{table}{Table}{Tables}

\DeclareMathOperator{\vol}{vol}
\DeclareMathOperator{\ecp}{ECP}
\DeclareMathOperator{\covprob}{CP}
\newcommand{\cravg}[1]{\overline{#1}}

\newcommand{\revised}[1]{{{#1}}}

\icmltitlerunning{Sampling-Based Accuracy Testing of Posterior Estimators for General Inference}

\begin{document}

\twocolumn[
\icmltitle{Sampling-Based Accuracy Testing of Posterior Estimators for General Inference}

\icmlsetsymbol{equal}{*}

\icmlcorrespondingauthor{Pablo Lemos}{pablo.lemos@umontreal.ca}

\begin{icmlauthorlist}
\icmlauthor{Pablo Lemos}{mila,udem,ciela,cca,equal}
\icmlauthor{Adam Coogan}{mila,udem,ciela,equal}
\icmlauthor{Yashar Hezaveh}{mila,udem,ciela}
\icmlauthor{Laurence Perreault-Levasseur}{mila,udem,ciela}
\end{icmlauthorlist}

\icmlaffiliation{mila}{Mila -- Quebec AI Institute, Montreal, Quebec, Canada}
\icmlaffiliation{udem}{Universit\'e de Montr\'eal, Montreal, Quebec, Canada}
\icmlaffiliation{ciela}{CIELA Institute, Montreal, Quebec, Canada}
\icmlaffiliation{cca}{Flatiron Institute Center for Computational Mathematics, 162 5th Ave, 3rd floor, New York, NY 10010, USA}

\icmlkeywords{}

\vskip 0.3in
]

\printAffiliationsAndNotice{}  

\begin{abstract}
    Parameter inference, i.e. inferring the posterior distribution of the parameters of a statistical model given some data, is a central problem to many scientific disciplines. 
    \revised{Generative models can be used as an alternative to Markov Chain Monte Carlo methods for conducting posterior inference, both in  likelihood-based and simulation-based problems.}
    However, assessing the accuracy of posteriors encoded in generative models is not straightforward. In this paper, we introduce `Tests of Accuracy with Random Points' (TARP) coverage testing as a method to estimate coverage probabilities of generative posterior estimators.
    Our method differs from previously-existing coverage-based methods, which require posterior evaluations. We prove that our approach is necessary and sufficient to show that a posterior estimator is accurate. We demonstrate the method on a variety of synthetic examples, and show that TARP can be used to test the results of posterior inference analyses in high-dimensional spaces. We also show that our method can detect inaccurate inferences in cases where existing methods fail.
\end{abstract}

\section{Introduction}

\revised{The task of parameter inference,} i.e.\ determining the values of unknown parameters $\theta$ in a statistical model \revised{consistent with} observed data $x$, is a ubiquitous \revised{task in scientific analyses.}
\revised{While multiple well-established approaches such as Markov-chain Monte Carlo (MCMC), variational inference (VI) and nested sampling~\citep{10.1214/06-BA127} already exist, there has been a recent shift towards applying machine learning for posterior inference amortized over different observations~\citep[e.g. ][]{2018JCoPh.366..415Z,2018arXiv180304765P, 2020arXiv200601490C, NEURIPS2020_322f6246, doi:10.1021/acs.jpca.9b08723}. This approach involves training a model (typically a neural network) to approximate the true posterior distribution as a function of the observation.
The goal is to efficiently infer the posterior for new data, eliminating the need for costly MCMC runs for each new observation.} 

Simulation-based inference~\citep[SBI, e.g. ][]{cranmer2020frontier}, also known as likelihood-free inference (LFI) or implicit likelihood inference (ILI), has gained significant popularity in recent years \citep[e.g. ][]{ong2018likelihood, PerreaultLevasseur:2017ltk, 10.7554/eLife.56261, PhysRevLett.127.241103,alsing2019fast,Wagner-Carena:2020yun,Legin:2021zup,2021NatRP...3..305B, Coogan:2020yux, Montel:2022fhv, Coogan:2022cky, Brehmer:2019jyt, chen2020neural, Mishra-Sharma:2021oxe, Karchev:2022xyn, Hermans:2020skz, AnauMontel:2022ppb, https://doi.org/10.48550/arxiv.2005.07062, https://doi.org/10.48550/arxiv.2109.14275, Karchev:2022ycy, 2022arXiv220306481R}. SBI does not require an explicit expression for the likelihood, and instead merely relies on having a simulator to generate training data. The SBI framework allows handling complex, high-dimensional data and models that are difficult or intractable to analyze using traditional likelihood-based methods.

Early developments of SBI include the introduction of Rejection Approximate Bayesian Computation (ABC)~\citep{10.1214/aos/1176346785, pritchard1999population, beaumont2002approximate, marjoram2003markov, fearnhead2012constructing}, but today SBI has evolved to encompass more powerful, neural network-powered, amortized methods, such as Neural Ratio Estimation (NRE)~\cite{cranmer2015approximating, thomas2022likelihood, hermans2020likelihood, durkan2020contrastive, Miller:2022haf}; Neural Posterior Estimation (NPE)~\citep{rezende2015variational, papamakarios2016fast, 2018arXiv180509294L, lueckmann2017flexible, greenberg2019automatic} and Neural Likelihood Estimation (NLE)~\citep{price2018bayesian, papamakarios2019sequential, frazier2022bayesian}. Recently there has been substantial interest in applying SBI in high-dimensional parameter spaces. Generative models, such as Generative Adversarial Networks \revised{GANs}~\citep{goodfellow2014generative}, Normalizing Flows~\citep{dinh2014nice, rezende2015variational, papamakarios2021normalizing},  Variational Autoencoders~\citep{kingma2013auto} and Score-Based/Diffusion Models~\citep{song2020score, DBLP:journals/corr/abs-2006-11239, DBLP:journals/corr/Sohl-DicksteinW15}, are powerful ways to encode approximate posteriors in such settings.

\revised{Convergence} tests for MCMC methods, such as the Gelman-Rubin statistic~\citep{gelman1992inference}, the effective sample size and the integrated autocorrelation time, are well-established. \revised{However, these assess the diversity of samples rather than directly guaranteeing that the posterior is being sampled correctly.} For SBI, testing for accuracy of the estimated posterior is often performed using coverage probabilities (but see also~\citet{2017arXiv170604599G}), relying on the evaluation of the density of the posteriors.~\citep{schall2012empirical, prangle2013diagnostic, cranmer2020frontier, hermans2021averting}. Coverage probabilities measure the proportion of the time that a certain interval contains the true parameter value. 
\revised{However, coverage probability calculations based on evaluations of the learned posterior distributions are not applicable to samples obtained from those generative models where such evaluations are not available, such as GANs and diffusion models.}
Furthermore, and more importantly, these coverage probability tests are a necessary but not sufficient diagnostic to assess the accuracy of the estimated posterior. 

\revised{Other methods have been suggested as alternative validations for SBI~\citep{lueckmann2021benchmarking, dalmasso2020conditional, deistler2022truncated}. For example, Simulation-Based Callibration (SBC)~\citep{talts2018validating}, proposes an interesting technique that uses only samples, but can only be used for one-dimensional posteriors and not the full-dimensional space. Another method, proposed by \citet{linhart2022validation}, is an efficient way to assess posterior accuracy but is designed specifically for normalizing flows and cannot be applied in other inference settings. None of these methods can be applied to assess the accuracy of inference for high-dimensional variables. 
}


The goal is this paper is to introduce a framework for testing the accuracy of parameter inference using only samples from the true joint distribution of data $x$ and the parameters of interest $\theta$, $p(x, \theta)$, and samples from the estimated posterior distribution $\hat{p}(\theta | x)$.

\revised{Our novel contributions are a proof of the necessary and sufficient conditions to verify the accuracy of posterior estimators through coverage checks (\cref{theorem:main}), along with a methods that practically implements this theorem (\cref{sec:drp_method}). We begin by introducing all necessary notation in~\cref{sec:formalism}. We then introduce our method in~\cref{sec:method}, present our experiments in~\cref{sec:experiments}, and summarize our findings in~\cref{sec:conc}. Our code is available at \href{https://github.com/Ciela-Institute/tarp}{\texttt{https://github.com/Ciela-Institute/tarp}}.}

\section{Formalism}\label{sec:formalism}

In this section, we introduce some basic concepts and build up to our key theoretical result (\cref{theorem:main}). The coverage testing procedure introduced in the following section is essentially a practical implementation of this theorem.

\subsection{Notation}

As stated in the introduction, we are interested continuous-valued parameters $\theta \in U \subset \mathbb{R}^n$ and observations $x \in V \subset \mathbb{R}^m$ taken from (subsets of) Euclidean space, with joint density $p(\theta, x)$. We denote our posterior estimator by $\hat{p}(\theta | x)$ (which could be a neural network or MCMC sampler, for example) and assume we can also use it to generate samples of $\theta$.

With these preliminaries, we make two basic definitions:
\begin{definition}\label{def:calibrated}
    A posterior estimator $\hat{p}(\theta| x)$ is {\bf accurate} if 
    \begin{equation}
        \hat{p}(\theta | x) = p(\theta | x ) \quad \forall (x, \theta) \sim p(x, \theta) \, .
    \end{equation}
\end{definition}

\begin{definition}\label{def:gen}
    A \textbf{credible region generator} $\mathcal{G}: \hat{p}, \alpha, x \mapsto W \subset U$ for a given credibility level \revised{$1 - \alpha$} and observation $x$ is a function satisfying
    \begin{equation}\label{eq:ECP_gen}
        \int_{\mathcal{G}(\hat{p}, \alpha, x)} \dd{\theta} \hat{p}(\theta | x) = 1 - \alpha \, .
    \end{equation}
\end{definition}
Note that there are an infinite number of such generators. A commonly-used one is the highest-posterior density region generator, defined in~\cref{sec:prev_work}.

Next, we introduce two central definitions for this work, adapted from \citet{hermans2021averting} (henceforth H21).
\begin{definition}\label{def:CP}
    The \textbf{coverage probability} for a generator $\mathcal{G}$, credibility level \revised{$1 - \alpha$} and datum $x$ is
    \begin{equation}\label{eq:CP}
        \covprob(\hat{p}, \alpha, x, \mathcal{G}) = \mathbb{E}_{p(\theta | x)} \left[ \mathds{1}\left(\theta \in  \mathcal{G}(\hat{p}, \alpha, x) \right) \right] \, .
    \end{equation}
\end{definition}

\begin{definition}\label{def:ecp}
    The \textbf{expected coverage probability} for a generator $\mathcal{G}$ and credibility level $\alpha$ is the coverage probability averaged over the data distribution:
    \begin{equation}\label{eq:ecp}
        \ecp(\hat{p}, \alpha, \mathcal{G}) = \mathbb{E}_{p(x)} \left[ \covprob(\hat{p}, \alpha, x, \mathcal{G}) \right] \, .
    \end{equation}
\end{definition}

\subsection{Coverage probability}

We now demonstrate some basic facts about estimators with correct coverage probabilities. We begin with a straightforward result:
\begin{theorem}
    The posterior has coverage probability $\operatorname{CP}(p, \alpha, x, \mathcal{G}) = 1 - \alpha$ for all values of $x$ and any credible region generator \revised{$\mathcal{G}(p, \alpha, x)$}.
\end{theorem}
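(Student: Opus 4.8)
The plan is to prove this by directly unfolding the three relevant definitions, since the statement is essentially a tautology once one tracks which density appears where. Write $W \defeq \mathcal{G}(p, \alpha, x)$ for the region obtained by feeding the \emph{true} posterior $p$ into the generator (i.e.\ instantiating the estimator slot $\hat p$ with $p$). Starting from \cref{def:CP}, I would rewrite the coverage probability of $p$ as
\[
\covprob(p, \alpha, x, \mathcal{G}) = \mathbb{E}_{p(\theta \mid x)}\!\left[\mathds{1}(\theta \in W)\right] = \int_{U} \mathds{1}(\theta \in W)\, p(\theta \mid x)\, \dd{\theta} = \int_{W} p(\theta \mid x)\, \dd{\theta}\, .
\]
Then I would invoke \cref{def:gen}: every credible region generator satisfies \eqref{eq:ECP_gen}, and reading that identity with the estimator slot set to $p$ gives exactly $\int_{W} p(\theta \mid x)\, \dd{\theta} = 1 - \alpha$. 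Chaining the two displays yields $\covprob(p, \alpha, x, \mathcal{G}) = 1 - \alpha$; since $x$ and $\mathcal{G}$ were arbitrary, this holds for all $x$ and all generators $\mathcal{G}$.

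The one point worth stating carefully — and the only place a careless argument could slip — is that the sampling distribution in \cref{def:CP} and the density integrated in \cref{def:gen} are here literally the same object, $p(\theta \mid x)$. This coincidence is precisely why the generic relation $\int_{\mathcal{G}(q, \alpha, x)} q(\theta \mid x)\, \dd{\theta} = 1 - \alpha$ collapses to the claim when $q = p$; for a generic estimator $\hat p \neq p$ the two densities differ and no such simplification occurs (indeed, that discrepancy is what the rest of the paper is built to detect).

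I do not anticipate a genuine obstacle. No regularity hypotheses beyond those already implicit in the definitions are needed — in particular, measurability of $W = \mathcal{G}(p, \alpha, x)$, so that $\mathds{1}(\theta \in W)$ is integrable and the expectation in \cref{def:CP} equals the set integral — and these are taken as given. The substantive content of the theorem is thus conceptual rather than computational: the true posterior is, by construction, perfectly calibrated against credible regions generated from itself, which is the baseline against which the coverage diagnostics for $\hat p$ will be compared.
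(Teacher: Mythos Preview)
Your proposal is correct and follows essentially the same approach as the paper: both unfold \cref{def:CP} into an integral of $p(\theta\mid x)$ over the generated region and then invoke \cref{def:gen} with $\hat p = p$ to conclude that this integral equals $1-\alpha$. Your version is slightly more verbose (introducing the shorthand $W$ and spelling out the intermediate indicator-to-region step), but the logic is identical.
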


\begin{proof}
    Substituting $\hat{p}(\theta | x) = p(\theta | x)$, the definition of coverage probability becomes:
    \begin{equation}
    \begin{split}
        \covprob(p, \alpha, x, \mathcal{G}) &= \mathbb{E}_{p(\theta | x)}\left[ \mathds{1}(\theta \in \mathcal{G}(p, \alpha, x) \right] \\
        &= \int_{\mathcal{G}(p, \alpha, x)} \dd{\theta} p(\theta | x) \\
        &= 1 - \alpha \, ,
    \end{split}
    \end{equation}
    where the last line follows from the definition of a credible region.
\end{proof}
It follows trivially from this that the posterior has $\ecp(p, \alpha, x, \mathcal{G}) = 1 - \alpha$ as well.

Next, we prove the more interesting reverse direction of this theorem, \revised{which requires introducing another type of credible region generator.
\begin{definition}
    A \textbf{positionable credible region generator} $\mathcal{P}_{\theta_r}(\hat{p}, \alpha, x)$ generates credible regions positioned at $\theta_r$, a freely-chosen point in parameter space, in the sense that
    \begin{equation}
        \lim_{\alpha \to 1} \mathcal{P}_{\theta_r}(\hat{p}, \alpha, x) = \{ \theta_r \}
    \end{equation}
    for all $x$ and $\theta_r$. The regions' shapes are not important: they could be, for example, balls or hypercubes. 
\end{definition}
}

\revised{
Lastly, we denote the average of a function $f(\cdot)$ over a credible region $\Theta$ positioned at $\theta_r$ as
\begin{equation}
    \cravg{f(\theta_r)}(\Theta) := \frac{1}{\vol[ \Theta ]} \int_{\Theta} \dd{\theta} f(\theta) \, .
\end{equation}
When $f(\cdot)$ is a probability density function, $\cravg{f(\cdot)}(\Theta)$ is as well, since it is the convolution of $f(\cdot)$ with the density $\mathds{1}(\theta \in \Theta)$.  
}

\begin{theorem}
    Suppose the coverage probability of a posterior estimator is equal to $1 - \alpha$ for a positionable credible region generator $\mathcal{P}_{\theta_r}$ for all $\theta_r$, $x$ and $\alpha$. Further, suppose that \revised{$\hat{p}(\cdot | x)$ and $p(\cdot | x)$ are both continuous on their domains}. Then $\hat{p}(\cdot | x) = p(\cdot | x)$.
\end{theorem}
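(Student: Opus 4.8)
The plan is to combine the two mass constraints that hold on the region $\mathcal{P}_{\theta_r}(\hat{p},\alpha,x)$ and then pass to the limit $\alpha \to 1$, in which that region collapses onto the single point $\theta_r$. First I would write down the two identities. The defining property of a credible region generator (\cref{eq:ECP_gen}) gives $\int_{\mathcal{P}_{\theta_r}(\hat{p},\alpha,x)} \hat{p}(\theta|x)\dd{\theta} = 1-\alpha$, while the hypothesis on the coverage probability together with \cref{eq:CP} gives $\int_{\mathcal{P}_{\theta_r}(\hat{p},\alpha,x)} p(\theta|x)\dd{\theta} = \covprob(\hat{p},\alpha,x,\mathcal{P}_{\theta_r}) = 1-\alpha$. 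Equating the two, for every $\theta_r$, $x$ and $\alpha$ one has $\int_{\mathcal{P}_{\theta_r}(\hat{p},\alpha,x)} p(\theta|x)\dd{\theta} = \int_{\mathcal{P}_{\theta_r}(\hat{p},\alpha,x)} \hat{p}(\theta|x)\dd{\theta}$.

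Next I would normalise by the volume of the region. The assumption that $\hat{p}(\cdot|x)$ has support everywhere guarantees that for every $\theta_r$ and every $\alpha<1$ a region of $\hat{p}$-mass $1-\alpha$ can in fact be positioned at $\theta_r$ with strictly positive volume, so dividing by $\vol[\mathcal{P}_{\theta_r}(\hat{p},\alpha,x)]$ is legitimate and yields the equality of region-averaged densities $\cravg{p(\cdot|x)}(\mathcal{P}_{\theta_r}(\hat{p},\alpha,x)) = \cravg{\hat{p}(\cdot|x)}(\mathcal{P}_{\theta_r}(\hat{p},\alpha,x))$.

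Finally I would let $\alpha \to 1$. By positionability, $\mathcal{P}_{\theta_r}(\hat{p},\alpha,x)$ shrinks to $\{\theta_r\}$, so its volume tends to $0$, and by the Lebesgue differentiation theorem — applied separately to the integrable densities $p(\cdot|x)$ and $\hat{p}(\cdot|x)$, using that the regions shrink regularly to $\theta_r$ — both region-averages converge to the respective pointwise values, $\cravg{p(\cdot|x)}(\mathcal{P}_{\theta_r}) \to p(\theta_r|x)$ and $\cravg{\hat{p}(\cdot|x)}(\mathcal{P}_{\theta_r}) \to \hat{p}(\theta_r|x)$, for Lebesgue-almost every $\theta_r$ and every $x$; if one additionally assumes $p$ and $\hat{p}$ are continuous in $\theta$, this holds at every $\theta_r$ and the limit is elementary. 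Combining with the previous display gives $\hat{p}(\theta|x) = p(\theta|x)$ for all $x$ and (a.e.) $\theta$, i.e. $\hat{p}(\cdot|x) = p(\cdot|x)$, which is the claim.

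I expect the one genuinely delicate step to be this passage to the limit: justifying that the volume-normalised integrals converge to the pointwise density values requires the shrinking regions to be regular in the sense of Lebesgue differentiation (e.g. balls or hypercubes of bounded eccentricity, exactly the kind of generator allowed here), or else a continuity hypothesis on the densities; everything else is bookkeeping with the two mass constraints. It is also worth stating explicitly that the role of the ``support everywhere'' hypothesis is precisely to make $\mathcal{P}_{\theta_r}(\hat{p},\alpha,x)$ well-defined with positive volume for every $\theta_r$, so that both the normalisation step and the collapse onto $\theta_r$ are meaningful.
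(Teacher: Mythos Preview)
Your proof is correct and follows essentially the same route as the paper: equate the two mass identities on $\mathcal{P}_{\theta_r}(\hat{p},\alpha,x)$, divide by the (positive) volume to get equality of region-averaged densities, and pass to the limit $\alpha\to1$. If anything, you are more careful than the paper at the final step, explicitly invoking Lebesgue differentiation (or continuity) where the paper simply asserts the limit.
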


\begin{proof}
    Define $\Theta := \mathcal{P}_{\theta_r}(\hat{p}, \alpha, x)$ for clarity.
    
    The integral in the definition of the coverage probability can be written as
    \begin{equation}
    \begin{split}
        \covprob(\hat{p}, \alpha, x, \mathcal{P}_{\theta_r}) &= 1 - \alpha \\
        &= \int_{\Theta} \dd{\theta} p(\theta | x) \\
        &= \vol[ \Theta ] \, \cravg{p(\cdot | x)}(\Theta) \, ,
    \end{split}
    \end{equation}
    where first equality follows by assumption. Since we've assumed $\hat{p}(\cdot | x)$ has support everywhere \revised{that $p(\cdot | x)$ has support}, the volume of the credible region is positive. By the definition of a credible region, we also have
    \begin{equation}
        1 - \alpha = \int_{\Theta} \dd{\theta} \hat{p}(\theta | x) = \vol[ \Theta ] \, \cravg{\hat{p}(\cdot | x)}(\Theta) \, .
    \end{equation}
    Setting this equal to the previous expression yields $\cravg{\hat{p}(\cdot | x)}(\Theta) = \cravg{p(\cdot | x)}(\Theta)$, which holds for all $\theta_r$ and $x$ by assumption. Taking $\alpha \to 1$ (i.e., making $\Theta$ small) gives the desired result.
\end{proof}

\subsection{Expected coverage probability}

\begin{figure*}[th]
    \centering
    \includegraphics[width=\linewidth]{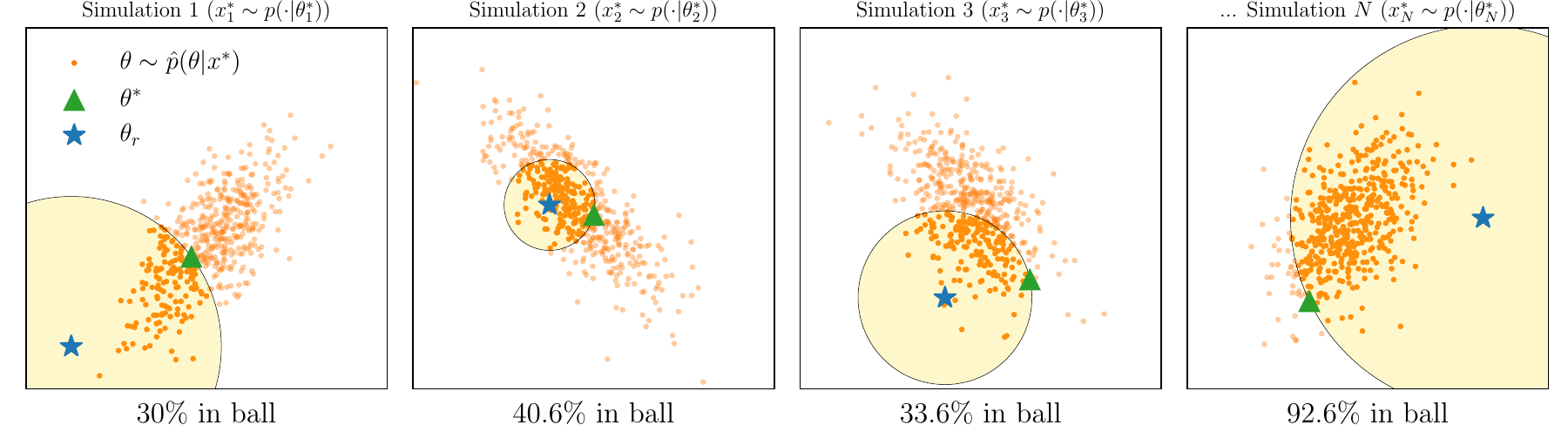}
    \caption{
        A graphical illustration of the proposed \revised{coverage test for assessing the quality of a posterior estimator $\hat{p}$. Given a set of simulations (panels), we draw samples from the posterior estimator (orange points). We sample a reference parameter point $\theta_r$, and determine the fraction of points $f$ falling within a ball centered on $\theta_r$ extending to the true parameter point $\theta^*$ used to generate the simulation (ball indicated in yellow, $f$ indicated below each panel). Our coverage test aggregates the statistics of $f$, providing a necessary and sufficient way to guarantee the accuracy of $\hat{p}$.}
    }
    \label{fig:method}
\end{figure*}

The previous result is still not very useful, since it is computationally very expensive to calculate the coverage probability of a posterior estimator. Practically, doing so requires producing histograms of the samples from $p(\theta, x)$ in $x$, which may be high-dimensional. However, as pointed out in H21, it's much simpler to compute the \emph{expected} coverage probability.

The next theorem is our main theoretical result: correct expected coverage is enough to verify the posterior estimator is accurate, as long as it is correct for any function $\theta_r(x)$ defining the positions of the credible regions.

\begin{theorem}\label{theorem:main}
    Suppose the expected coverage probability of $\hat{p}$ is equal to $1 - \alpha$ for a positionable credible region generator $\mathcal{P}_{\theta_r}$ for all $\alpha$, $x$, and $\theta_r(\cdot)$ assigning a position to the credible regions as a function of $x$. Further suppose that $\hat{p}(\cdot | x)$ \revised{has support everywhere that $p(\cdot | x)$ has support, and that both functions are continuous on their domains.} Then $\hat{p}(\cdot | x) = p(\cdot | x)$.
\end{theorem}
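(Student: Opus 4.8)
The plan is to replay the volume-times-average decomposition from the proof of the previous theorem, but now to keep the average of $p(\cdot \mid x)$ over each small credible region as a residual that correct expected coverage forces to agree with that of $\hat{p}(\cdot \mid x)$ \emph{after averaging over} $x$, and then to exploit the freedom in the choice of $\theta_r(\cdot)$ to pin this down pointwise. Fix any $\theta_r(\cdot)$ and set $\Theta_x := \mathcal{P}_{\theta_r(x)}(\hat{p}, \alpha, x)$. Exactly as before, $\hat{p}(\cdot \mid x)$ having full support forces $\vol[\Theta_x] > 0$, and the credible-region constraint gives $\vol[\Theta_x]\, \cravg{\hat{p}(\cdot \mid x)}(\Theta_x) = 1 - \alpha$, so $\vol[\Theta_x] = (1-\alpha)/\cravg{\hat{p}(\cdot \mid x)}(\Theta_x)$. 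Plugging this into the coverage probability,
\begin{equation}
\begin{split}
    \covprob(\hat{p}, \alpha, x, \mathcal{P}_{\theta_r}) &= \vol[\Theta_x]\, \cravg{p(\cdot \mid x)}(\Theta_x) \\
    &= (1-\alpha)\, \frac{\cravg{p(\cdot \mid x)}(\Theta_x)}{\cravg{\hat{p}(\cdot \mid x)}(\Theta_x)} \, ,
\end{split}
\end{equation}
and averaging over $p(x)$, using the hypothesis $\ecp(\hat{p}, \alpha, \mathcal{P}_{\theta_r}) = 1 - \alpha$, and dividing by $1 - \alpha$ gives
\begin{equation}\label{eq:sketch-key}
    \mathbb{E}_{p(x)}\!\left[ \frac{\cravg{p(\cdot \mid x)}(\Theta_x)}{\cravg{\hat{p}(\cdot \mid x)}(\Theta_x)} \right] = 1
\end{equation}
for every $\alpha \in (0,1)$ and every $\theta_r(\cdot)$.

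Next I would send $\alpha \to 1$. The regions $\Theta_x$ then shrink to the single point $\theta_r(x)$; taking them to be balls (their shape being immaterial, as noted in the text) and using continuity of the densities, each region-average converges to the corresponding density value, so the ratio in~\eqref{eq:sketch-key} tends to $g(\theta_r(x) \mid x) := p(\theta_r(x) \mid x) / \hat{p}(\theta_r(x) \mid x)$. Passing the limit through $\mathbb{E}_{p(x)}$ yields
\begin{equation}\label{eq:sketch-pt}
    \mathbb{E}_{p(x)}\!\left[ g(\theta_r(x) \mid x) \right] = 1 \qquad \text{for every } \theta_r(\cdot) \, .
\end{equation}
On the other hand, for each fixed $x$ we have $\int_U \dd{\theta}\, g(\theta \mid x)\, \hat{p}(\theta \mid x) = \int_U \dd{\theta}\, p(\theta \mid x) = 1$, i.e.\ $g(\cdot \mid x)$ has $\hat{p}(\cdot \mid x)$-mean one; since $\hat{p}(\cdot \mid x) > 0$ everywhere, $g(\cdot \mid x)$ is therefore either identically $1$ or strictly larger than $1$ at some point, and in particular some $\theta$ always satisfies $g(\theta \mid x) \ge 1$.

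To finish I would argue by contradiction. Suppose $g \not\equiv 1$. Then there is a point $(\theta_0, x_0)$ with $g(\theta_0 \mid x_0) > 1$, and by continuity $g(\theta \mid x) > 1$ on a product neighbourhood $B \times N$ of $(\theta_0, x_0)$. Define $\theta_r(x) := \theta_0$ for $x \in N$ and, for $x \notin N$, let $\theta_r(x)$ be any point with $g(\theta_r(x) \mid x) \ge 1$ (which exists by the mean-one property; a measurable choice is guaranteed by a standard selection argument). Then
\begin{equation}
    \mathbb{E}_{p(x)}\!\left[ g(\theta_r(x) \mid x) \right] \ge \int_N \dd{x}\, p(x)\, g(\theta_0 \mid x) + \int_{V \setminus N} \dd{x}\, p(x) > 1 \, ,
\end{equation}
because $p(x) > 0$ makes $N$ carry strictly positive probability on which $g(\theta_0 \mid \cdot) > 1$ strictly, while the second integral equals $\int_{V \setminus N} \dd{x}\, p(x) = 1 - \int_N \dd{x}\, p(x)$. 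This contradicts~\eqref{eq:sketch-pt}, so $g \equiv 1$, i.e.\ $\hat{p}(\cdot \mid x) = p(\cdot \mid x)$ everywhere, as claimed.

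The step I expect to be the real obstacle is moving $\lim_{\alpha \to 1}$ inside $\mathbb{E}_{p(x)}$ when passing from~\eqref{eq:sketch-key} to~\eqref{eq:sketch-pt}: this requires enough regularity of the densities (continuity, so that averages over shrinking regions converge to pointwise values, and regions that contract to points without degenerating into thin slivers) together with a domination or uniform-integrability bound on $\cravg{p(\cdot \mid x)}(\Theta_x)/\cravg{\hat{p}(\cdot \mid x)}(\Theta_x)$, uniform in $\alpha$ near $1$, so that dominated convergence applies. By comparison, the measurable-selection point in the final step is a routine technicality.
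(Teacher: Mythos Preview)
Your argument is correct and reaches the same key identity as the paper, namely $\int \dd{x}\, p(x)\, \cravg{p(\cdot\mid x)}(\Theta_x)/\cravg{\hat{p}(\cdot\mid x)}(\Theta_x) = 1$ for every $\alpha$ and every position function $\theta_r(\cdot)$, but from there the two proofs diverge. The paper keeps $\alpha$ fixed and takes a \emph{functional derivative} of this identity with respect to $\theta_r(\cdot)$: since the integral is constant for all choices of $\theta_r$, the Euler--Lagrange condition forces $\partial_{\theta_{r,i}} \log \cravg{p(\cdot\mid x)}(\Theta) = \partial_{\theta_{r,i}} \log \cravg{\hat{p}(\cdot\mid x)}(\Theta)$ for each $x$, whence $\cravg{p} = \cravg{\hat{p}}$ after fixing the integration constant via normalization; only then does the paper send $\alpha \to 1$. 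You instead take $\alpha \to 1$ \emph{first} to reduce to $\mathbb{E}_{p(x)}[g(\theta_r(x)\mid x)] = 1$ for all $\theta_r(\cdot)$ with $g = p/\hat{p}$, and then finish by a direct measurable-selection / contradiction argument exploiting that $g(\cdot\mid x)$ has $\hat{p}$-mean one. Your route is more elementary---it avoids calculus of variations entirely and makes the role of the arbitrary $\theta_r(\cdot)$ very transparent---at the cost of front-loading the limit exchange you correctly flag as the delicate step. The paper's variational route postpones that limit but tacitly assumes enough smoothness in $\theta_r$ for the functional derivative and the subsequent integration-plus-normalization step to go through, so the regularity burden is comparable; you are simply more explicit about where it sits.
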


\begin{proof}
    Again, let $\Theta := \mathcal{P}_{\theta_r}(\hat{p}, \alpha, x)$ for clarity.
    
    First, we leverage the definition of credible regions to find an expression for the volume of $\Theta$:
    \begin{equation}
        1 - \alpha = \int_{\Theta} \dd{\theta} \hat{p}(\theta | x) = \vol[ \Theta ] \, \cravg{p(\cdot | x)}(\Theta) \, ,
    \end{equation}
    which implies
    \begin{equation}
        \vol[ \Theta ] = \frac{1 - \alpha}{\cravg{p(\cdot | x)}(\Theta)} \, .
    \end{equation}
    This allows us to expand and simplify the expression for the expected coverage:
    \begin{align}
    \begin{split}
        \ecp(\hat{p}, \alpha, \mathcal{P}_{\theta_r}) &= 1 - \alpha \\
        &= \int \dd{x} p(x) \int_{\Theta} \dd{\theta} p(\theta | x) \\
        &= \int \dd{x} p(x) \, \vol[ \Theta ] \, \cravg{p(\cdot | x)}(\Theta) \\
        &= (1 - \alpha) \int \dd{x} p(x) \, \frac{\cravg{p(\cdot | x)}(\Theta)}{\cravg{\hat{p}(\cdot | x)}(\Theta)} \, .
    \end{split}
    \end{align}
    Canceling the factors of $1 - \alpha$ gives that the integral in the last line is equal to $1$.

    By assumption, this holds for \emph{any} choice of position function $\theta_r(x)$. We can therefore take the functional derivative of the integral with respect to $\theta_r(x)$. Recalling that the averages in the integrand depend on $\theta_r$, we obtain
    \begin{align}
        0 &= \frac{\delta}{\delta \theta_r(x)} \int \dd{x} p(x) \, \frac{\cravg{p(\cdot | x)}(\Theta)}{\cravg{\hat{p}(\cdot | x)}(\Theta)} \\
        &= \int \dd{x} \delta\theta_{r,i}(x) \, p(x) \pdv{}{\theta_{r,i}} \left( \frac{\cravg{p(\cdot | x)}(\Theta)}{\cravg{\hat{p}(\cdot | x)}(\Theta)} \right) \\
        &= \int \dd{x} \delta\theta_{r,i}(x) \frac{\cravg{p(\cdot | x)}(\Theta) \, p(x)}{\cravg{\hat{p}(\cdot | x)}(\Theta)} \nonumber\\
        & \qquad\qquad \times\left[ \pdv{\log \cravg{p(\cdot | x)}(\Theta)}{\theta_{r,i}} - \pdv{\log \cravg{\hat{p}(\cdot | x)}(\Theta)}{\theta_{r,i}} \right] \, ,
    \end{align}
    where the $i$ subscript indexes the components of $\theta_r$. Since this expression must hold for all variations $\delta\theta_{r,i}$, the integrand must evaluate to zero (i.e., the Euler-Lagrange equation must be satisfied). By assumption, the factor outside the braces in the integrand is nonzero, implying
    \begin{equation}
        \pdv{\log \cravg{p(\cdot | x)}(\Theta)}{\theta_{r,i}} = \pdv{\log \cravg{\hat{p}(\cdot | x)}(\Theta)}{\theta_{r,i}} \, .
    \end{equation}
    This implies $\log \cravg{p(\cdot | x)}(\Theta) = \log \cravg{\hat{p}(\cdot | x)}(\Theta) + c(x)$, for some $x$-dependent integration constant $c$. But since the functions inside the logarithms themselves densities, we have $c(\cdot) = 0$. Taking the limit $\alpha \to 1$ gives $\hat{p}(\theta | x) = p(\theta | x)$.
\end{proof}

The coverage testing method we will introduce in the next section is effectively a practical implementation of this theorem.

\section{Our method}\label{sec:method}

With our main theoretical result proven (c.f.\ \cref{theorem:main}), in this section we use it to first explain the blind spots of typical coverage probability calculations and then introduce our new coverage checking procedure.

\subsection{High posterior density coverage testing}\label{sec:prev_work}

\begin{figure*}[t]
    \centering
    \includegraphics[width=.99\linewidth]{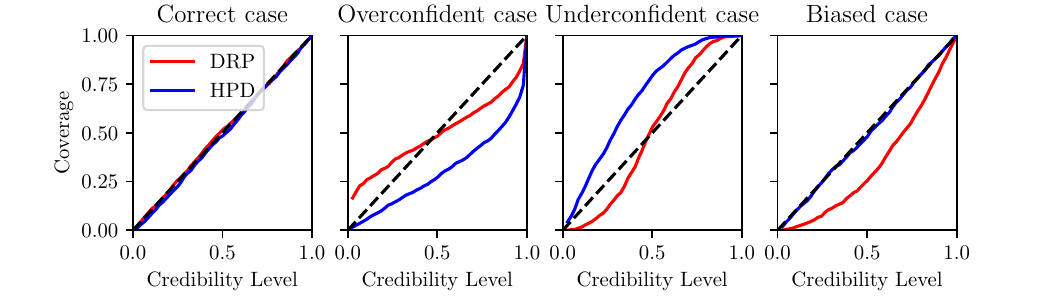}
    \caption{
    Results on the Gaussian toy model for all four cases described in~\cref{sec:toy}. The red line shows the method presented in this paper, while the blue shows the HPD region.
    }
    \label{fig:toy_gaussian}
\end{figure*}

\begin{algorithm*}[tb]
    \caption{Calculation of ${\rm ECP}(\hat{p}, \alpha, \mathcal{H})$ using highest posterior density regions, from a set of simulations $\left\{\theta_i, x_i\right\}$, $i \in [1, N_{\rm sims}$]}\label{alg:ecp_hpd}
    \begin{algorithmic}
        \STATE {Generate $n$ samples $\left\{ \theta_{i j} \right\} \sim \hat{p}(\theta | x_i)$ for each simulation $x_i$.} 
        \FOR {$i \gets 1$ to $N_{\rm sims}$}
        \STATE {$f_i = (1 / $n$) \cdot \sum_{j=1}^n  \mathds{1} \left[ \hat{p} (\theta_{ij} | x_i) < \hat{p} (\theta^*_i | x_i) \right]$} 
        \ENDFOR
        \STATE {${\rm ECP}(\hat{p}, \alpha, \mathcal{H}) = (1 / N_{\rm sims}) \sum_{i=1}^{N_{\rm sims}} \mathds{1} \left( f_i < 1 - \alpha\right)$}
    \end{algorithmic}
\end{algorithm*}

Before introducing the proposed method, we first discuss HPD coverage.

\revised{
\begin{definition}
    We define the HPD credible region generator $\mathcal{H} (\hat{p}, \alpha, x)$ as the generator that produces the region with mass $1 - \alpha$ occupying the smallest possible volume in $U$\footnote{Note this is ill-defined for the uniform density function.}.
\end{definition}
Note that this is not a positionable credible region generator.
This can be used combined with~\cref{def:ecp} to calculate High-Posterior Density Expected Coverage Probabilities (HPD ECPs). HPD ECPs are often used to assess coverage \cite{hermans2021averting, rozet2021arbitrary, Miller2022, deistler2022truncated, 2020JOSS....5.2505T}. 
}

\revised{
Perhaps the most intuitive way of calculating expected coverage probability using HPD regions is to compute such a region for all possible values of $\alpha$,\footnote{
    \revised{Note that previous works such as \citet{PerreaultLevasseur:2017ltk} have attempted to perform accuracy testing from a handful of values of $\alpha$. This test is not nearly as restrictive as scanning over all possible values of $\alpha$ as is typically used for coverage testing.}
} then calculate the expected coverage using~\cref{eq:CP}. In practice, however, there is a more efficient calculation of expected coverage probabilities, which is derived from the following result:
\begin{remark}
    A pair ($\theta^*, x^*$), and a posterior estimator $\hat{p}(\theta | x)$ uniquely define a HPD confidence region as: 
    \begin{equation}\label{eq:theta_hpd}
        \left\{ \theta \in U \ | \ \hat{p}(\theta | x^*) \geq \hat{p}(\theta^* | x^*) \right\}.
    \end{equation}
    This, in turn, defines a corresponding {\bf HPD confidence level} $1 - \tilde{\alpha}_{\rm HPD}(\hat{p}, \theta^*, x^*)$, as the integral of $\hat{p}(\theta | x)$ over that region.
\end{remark}
}

We can then rederive an important result for this HPD confidence level:
\begin{lemma}\label{lemma:hpd}
    We can calculate the ECP of the $1 - \alpha$ highest posterior density regions as:
    \begin{equation}\label{eq:ECP_alt}
        \ecp(\hat{p}, \alpha, \mathcal{H}) = \mathbb{E}_{p(\theta, x)} \left[ \mathds{1}\left( \tilde{\alpha}_{\rm HPD}(\hat{p}, \theta, x) \geq \alpha \right) \right].
    \end{equation}
\end{lemma}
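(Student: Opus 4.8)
The plan is to strip off the outer expectation and then reduce the whole statement to a single pointwise fact about HPD membership. First I would combine \cref{def:ecp} and \cref{def:CP} with the generator $\mathcal{G}=\mathcal{H}$: writing $\covprob(\hat p,\alpha,x,\mathcal H)=\mathbb E_{p(\theta|x)}[\mathds 1(\theta\in\mathcal H(\hat p,\alpha,x))]$ and averaging over $p(x)$, the tower rule collapses the nested expectations into $\ecp(\hat p,\alpha)=\mathbb E_{p(\theta,x)}[\mathds 1(\theta\in\mathcal H(\hat p,\alpha,x))]$. Comparing with \eqref{eq:ECP_alt}, the entire content of the lemma is the equivalence: for $p$-almost every pair $(\theta,x)$, one has $\theta\in\mathcal H(\hat p,\alpha,x)$ if and only if $\tilde\alpha_{\rm HPD}(\hat p,\theta,x)\ge\alpha$.

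To prove that equivalence I would use that both regions in play are superlevel sets of $\theta'\mapsto\hat p(\theta'|x)$. By \cref{def:gen} and the definition of the HPD generator, $\mathcal H(\hat p,\alpha,x)=\{\theta':\hat p(\theta'|x)\ge\lambda_\alpha(x)\}$, where the cutoff $\lambda_\alpha(x)$ is fixed by $\int_{\{\hat p(\cdot|x)\ge\lambda_\alpha(x)\}}\hat p(\theta'|x)\,\dd{\theta'}=1-\alpha$; meanwhile $\Theta_{\rm HPD}(x,\theta,\hat p)=\{\theta':\hat p(\theta'|x)\ge\hat p(\theta|x)\}$ carries $\hat p(\cdot|x)$-mass $1-\tilde\alpha_{\rm HPD}(\hat p,\theta,x)$ by definition. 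Introducing the tail function $M_x(\lambda):=\int_{\{\hat p(\cdot|x)\ge\lambda\}}\hat p(\theta'|x)\,\dd{\theta'}$, which is non-increasing in $\lambda$ with $M_x(\lambda_\alpha(x))=1-\alpha$ and $M_x(\hat p(\theta|x))=1-\tilde\alpha_{\rm HPD}(\hat p,\theta,x)$, the argument becomes a short monotonicity chase: $\theta\in\mathcal H(\hat p,\alpha,x)\Leftrightarrow\hat p(\theta|x)\ge\lambda_\alpha(x)\Rightarrow M_x(\hat p(\theta|x))\le M_x(\lambda_\alpha(x))$, i.e. $1-\tilde\alpha_{\rm HPD}(\hat p,\theta,x)\le1-\alpha$, i.e. $\tilde\alpha_{\rm HPD}(\hat p,\theta,x)\ge\alpha$; conversely $\hat p(\theta|x)<\lambda_\alpha(x)$ forces $M_x(\hat p(\theta|x))\ge1-\alpha$, strictly unless $\hat p(\cdot|x)$ is constant on a positive-mass set at a level in $[\hat p(\theta|x),\lambda_\alpha(x))$, which gives $\tilde\alpha_{\rm HPD}(\hat p,\theta,x)<\alpha$. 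Chaining the two implications yields the equivalence, and plugging it into the reduction of the first paragraph produces \eqref{eq:ECP_alt}.

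The main obstacle is the boundary/ties case. If $\hat p(\cdot|x)$ has plateaus of positive mass, then $M_x$ has jumps, the cutoff $\lambda_\alpha(x)$ need not realize mass exactly $1-\alpha$ for every $\alpha$, the minimal-volume region is defined only up to a null set, and the equivalence can fail on the level set $\{\theta':\hat p(\theta'|x)=\lambda_\alpha(x)\}$. I would handle this either by restricting to estimators whose conditional densities have no such plateaus, so that $M_x$ is continuous and strictly decreasing on the support of $\hat p(\cdot|x)$ and the equivalence is exact for every $\alpha$, or by noting that the ambiguous set is a single level set, which carries zero $p(\theta|x)$-mass away from degenerate cases, so the two indicators agree $p(\theta,x)$-almost surely and the expectations coincide regardless. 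I would then remark that \cref{alg:ecp_hpd} is precisely the plug-in Monte Carlo version of \eqref{eq:ECP_alt}, with the inner average over posterior samples estimating the tail mass $M_x(\hat p(\theta^\ast|x))$ and hence $1-\tilde\alpha_{\rm HPD}$.
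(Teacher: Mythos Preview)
Your proposal is correct and follows essentially the same approach as the paper: both reduce the lemma to the pointwise equivalence $\theta\in\mathcal H(\hat p,\alpha,x)\Leftrightarrow\tilde\alpha_{\rm HPD}(\hat p,\theta,x)\ge\alpha$, justified by the nesting of HPD superlevel sets, and then invoke the definition of $\ecp$. Your version is simply more explicit---introducing the tail function $M_x$ to make the monotonicity argument precise and flagging the plateau/ties edge case that the paper's one-line containment argument glosses over---but the underlying idea is identical.
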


\begin{proof}
    Firstly, we notice that:
    \revised{
    \begin{equation}
        \theta^* \in \mathcal{H} \left( \hat{p}, \alpha, x^* \right) \Leftrightarrow \tilde{\alpha}_{\rm HPD}(\hat{p}, \theta^*, x^*) \geq \alpha.
    \end{equation}
    This follows from the fact that, if $\theta^* \in \mathcal{H} \left( \hat{p}, \alpha, x^* \right)$, then the HPD confidence region defined by $( \theta^*, x^* )$ is contained in $\mathcal{H} \left( \hat{p}, \alpha, x^* \right)$.
    }
    
    Then, from~\cref{eq:ecp}, it follows that~\cref{eq:ECP_alt} is true.
\end{proof}

This result can be used in practice to calculate the HPD ECP from samples of the true joint distribution $p(\theta, x)$, as shown in~\cref{alg:ecp_hpd}. As previously discussed, this algorithm requires explicit evaluations of the posterior estimator. We try to provide more intuitive connections between both definitions in~\cref{app:connection}.

As is well-known in the literature, estimating the ECP with HPD regions is not enough to demonstrate a posterior estimator is accurate. \Cref{theorem:main} reveals why: by definition, the HPD region generator is not positionable. Positionability is critical to the proof of the theorem, since it requires varying the position function $\theta_r(x)$.

To concretely demonstrate how considering only HPD coverage can fail, we consider the interesting case discussed in H21 of $\hat{p}(\theta | x) = p(\theta)$. From the definition of ECP, 
\begin{equation}
\begin{split}
    \ecp(\hat{p}, \alpha, \mathcal{H}) &= \mathbb{E}_{p(x, \theta)}[ \mathds{1}(\theta \in \mathcal{H}(\hat{p}, \alpha) ] \\
    &= \mathbb{E}_{p(\theta)}[ \mathds{1}(\theta \in \mathcal{H}(\hat{p}, \alpha) ] \\
    &= \int_{\mathcal{H}(\hat{p}, \alpha)} \dd{\theta} p(\theta) \\
    &= 1 - \alpha \, .
\end{split}
\end{equation}
In the second line, we used the fact that HPD generator is independent of $x$ in this case $\mathcal{H}(\hat{p}, \alpha, x) = \mathcal{H}(\hat{p}, \alpha)$. We recognize the third line as the definition of a credible region for the prior, yielding the fourth line. This means that $\hat{p}(\theta | x)$ has perfect HPD ECP in this case. 

We now introduce a coverage testing method that remedies such blind spots.

\subsection{Distance to random point coverage testing}\label{sec:drp_method}

\begin{figure*}[t]
    \centering
    \includegraphics[width=.99\linewidth]{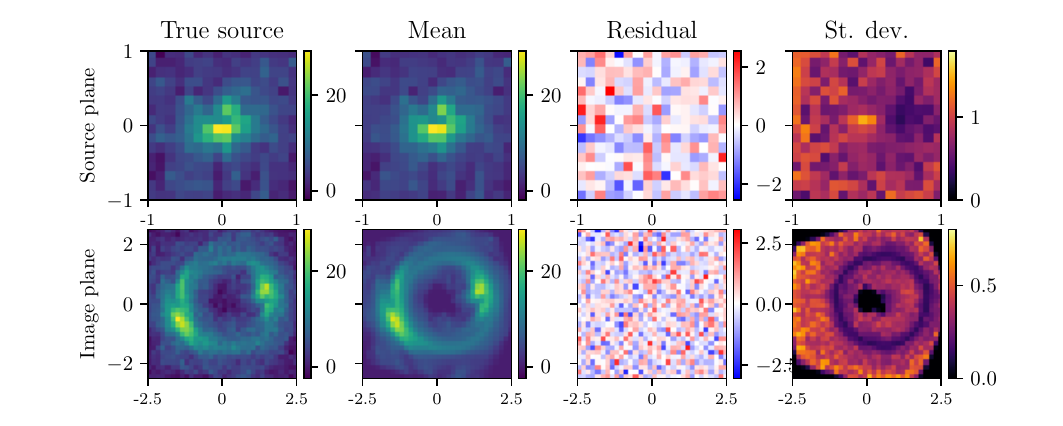}
    \caption{
    An example of one of the lensing simulations performed. The top panels show the (latent) source plane that we are trying to infer, while the bottom panels show the distorted images. From left to right, the plot shows the truth, mean, and standard deviation of the samples from the posterior estimator (in the case of this figure, the `exact' estimator), and the residuals. The noise in the observations is set to 1 on the color scales shown here.
    }
    \label{fig:lensing_example}
\end{figure*}

\begin{algorithm*}[tb]
    \caption{Calculation of ${\rm ECP}(\hat{p}, \alpha, \mathcal{D}_{\theta_r})$ using the TARP method, using a set of simulations $\left\{\theta_i, x_i\right\}$, $i \in \{1, \dots, N_{\rm sims} \}$, parameter distance metric $d: U \times U \rightarrow \mathbb{R}^{\geq 0}$ and reference point sampling distribution $\tilde{p}(\cdot | x)$.}\label{alg:ecp_name}
    \begin{algorithmic}
        \STATE {Generate $n$ samples $\left\{ \theta_{i j} \right\} \sim \hat{p}(\theta | x_i)$ for each simulation $x_i$, where $j = \{ 1, \dots, n \}$.} 
        \FOR {$i \gets 1$ to $N_{\rm sims}$}
        \STATE {$\theta_r \sim \tilde{p}(\theta_r | x)$} \COMMENT{Generate reference point}
        \STATE {$f_i = (1 / $n$) \cdot \sum_{j=1}^n  \mathds{1} \left[ d (\theta_{ij}, \theta_r) < d (\theta^*_i, \theta_r) \right]$} 
        \ENDFOR
        \STATE {${\rm ECP}(\hat{p}, \alpha, \mathcal{D}_{\theta_r}) = (1 / N_{\rm sims}) \sum_{i=1}^{N_{\rm sims}} \mathds{1} \left( f_i < 1 - \alpha \right)$}
    \end{algorithmic}
\end{algorithm*}

The method proposed here generates spherical credible regions around position $\theta_r$:

\revised{
\begin{definition}
    Given a distance metric $ d : U \times U \rightarrow R$, We define the generator of TARP regions $\mathcal{D}_{\theta_r} (\hat{p}, \alpha, x, d)$ as the positionable generator that produces credible regions of credibility level $1 - \alpha$:
    \begin{equation}
        \mathcal{D}_{\theta_r} (\hat{p}, \alpha, x, d) := \left\{ \theta \in U \ | \ d(\theta, \theta_r) \leq R(\hat{p}, \alpha, x) \right\},
    \end{equation}
    where $R(\hat{p}, \alpha, x)$ is such that~\cref{eq:ECP_gen} is satisfied. 
\end{definition}
}

\revised{
From this result, and similarly to the previous section, a key result follows:
}

\begin{remark}
     A pair ($\theta^*, x^*$), and a posterior estimator $\hat{p}(\theta | x)$ uniquely define a \textbf{TARP}\footnote{TARP is short for "Test of Accuracy with Random Points. A previous version of this paper used the name DRP ("Distance to Random Point").} credible region for a given $d$ and $\theta_r$: 
     \revised{
    \begin{equation}
        \left\{ \theta \in U \ | \ d(\theta, \theta_r) \leq d(\theta^*, \theta_r) \right\}
    \end{equation}
    }
    This, in turn, defines a corresponding  {\bf TARP confidence level} 
    $1 - \tilde{\alpha}_{\rm TARP}(\hat{p}, \theta^*, \theta_r, d)$. \revised{as the integral of $\hat{p}(\theta | x)$ over that region}.
\end{remark}

We can calculate expected coverage similar to the HPD case:
\begin{lemma}
    We can calculate the ECP of the $1 - \alpha$ TARP regions as:
    \begin{equation}\label{eq:ECP_drp_alt}
        \ecp(\hat{p}, \alpha, \mathcal{D}_{\theta_r}) = \mathbb{E}_{p(\theta, x)} \left[ \mathds{1}(\tilde{\alpha}_{\rm TARP}(\hat{p}, \theta^*, \theta_r, x^*, d) \geq \alpha) \right].
    \end{equation}
\end{lemma}

\begin{proof}
    Let $\mathcal{D}_{\theta_r}\left( x^*, \alpha, \hat{p}, d \right)$ be a ball centered at $\theta_r$ with radius $R(\hat{p}, \alpha, x)$ and credibility $1 - \alpha$. Similarly, the TARP region defined by $(\theta^*, x^*)$ has the same center, radius $d(\theta^*, \theta_r)$, and credibility $1 - \tilde{\alpha}$ for some $\tilde{\alpha}$. It then follows that: 
    \begin{equation}
        \theta^* \in \mathcal{D}_{\theta_r}\left( x^*, \alpha, \hat{p}, d \right) \Leftrightarrow d(\theta^*, \theta_r) \leq R(\hat{p}, \alpha, x).
    \end{equation}
    Since $R$ is a monotonic function of $\alpha$ and the regions are centered on the same point, we have
    \begin{equation}
        d(\theta^*, \theta_r) \leq R(\hat{p}, \alpha, x) \Leftrightarrow \tilde{\alpha} \geq \alpha \, .
    \end{equation}
    Then by \cref{eq:ecp} we have \cref{eq:ECP_drp_alt}.
\end{proof}

With this, we have everything we need to formulate our algorithm, which is presented in~\cref{alg:ecp_name}. While similar to~\cref{alg:ecp_hpd}, there are \revised{three} key differences to this algorithm: 
\begin{itemize}
    \item TARP implements \cref{theorem:main}'s requirement that coverage holds for all possible ways of choosing the positions of the credible regions by randomly sampling $\theta_r$ from some distribution $\tilde{p}(\theta | x)$ that can depend on $x$.
    \item TARP probes credible regions of smaller size (i.e., larger $\alpha$) as the number of posterior samples, simulations, and reference points tested is increased. Following the logic of the proof of \cref{theorem:main}, this means it asymptotically tests whether the averages of $\hat{p}(\theta | x)$ and $p(\theta | x)$ match on smaller and small balls.
    \item TARP does not require explicit evaluations of the posterior estimator $\hat{p}$: it only requires calculating distances between parameters sampled from $\hat{p}$ and $\theta_r$.
\end{itemize}

 In the following section, we test the proposed method in a series of experiments and compare its performance with that of HPD coverage probabilities.

\section{Experiments}\label{sec:experiments}

We apply our algorithm, described in~\cref{alg:ecp_name} to three different experiments. For all experiments, we normalize all parameters $\theta$ to the range $[0, 1]$, and unless otherwise specified, we generate reference points uniformly in the $D$-dimensional hypercube $x \in [0, 1]^D$ where $D$ is the dimensionality of the parameter space. We use the Euclidean or L2 distance as a metric to calculate TARP regions. \revised{We explore the dependence on the reference point distribution and the distance metric in~\cref{sec:dependence}.}

\subsection{Gaussian Toy Model}\label{sec:toy}

As a first example, we can use a simple Gaussian toy model. In this model, we assume that all the posterior distributions are Gaussian. Therefore, we can generate samples from the posterior for a validation simulation from the estimated mean and covariance matrix. 
We first generate `simulations', by uniform sampling in our parameter space, $\theta^* \sim \mathcal{U}(-5, 5)$. We also randomly generate the diagonal elements of the covariances matrices $\Sigma$ of our posterior estimates by sampling from $\log \sigma \sim \mathcal{U}(-5, -1)$, and set the off-diagonal elements to $0$. To validate, we also need to know the mean of the posterior distributions. We consider three cases: 
\begin{itemize}
    \item Firstly, we draw these from a normal distribution \revised{$\mathcal{N}(\theta^*, \Sigma)$}. This means that the coverage probabilities should show a uniform distribution. We call this the {\it correct case}.
    \item Secondly, we draw the true values from \revised{$\mathcal{N}(\theta^*, 0.5 \Sigma)$ and $\left(\mathcal{N}(\theta^*, 2 \Sigma) \right)$}. This means that the posterior samples come from a distribution that is too narrow (wide), and are therefore overconfident (underconfident)
    \item Lastly, we want to build a {\it biased case}. For this, we pick the means to be equal to: 
    \revised{
    \begin{equation}\label{eq:biased}
        \theta^* - {\rm sign}(\theta^*) \cdot Z \left( 1 - \frac{|\theta^*|}{5}\right) \cdot \sigma,
    \end{equation}
    }
    where $Z$ is the inverse survival function. The idea with this example is to create a position-dependent bias: The furthest the true value is from the origin, the more biased the posterior is. We have specifically designed this bias in a way that \revised{HPD} coverage probabilities will be blind to it. However, the point of this example is to show that there are biases that \revised{HPD} can be blind to, but the random nature of TARP should be able to detect. \revised{The function~\cref{eq:biased} is plotted in appendix~\cref{app:biased_case}}
\end{itemize}
For each of these  cases, we want to compare how this method compares to the HPD coverage probability test. Because in this toy model we know the correct posterior, we can easily compute both HPD and TARP coverage probabilities. To pick the TARP reference points, we use the prior ($\tilde{p}(\theta_r | x) = p(\theta_r)$).

The results for our Gaussian toy model are shown in~\cref{fig:toy_gaussian}. In each panel, the $x$-axis shows the credibility level $1 - \alpha$, while the $y$-axis shows the expected coverage ${\rm ECP}(\hat{p}, \alpha, \mathcal{G})$. For an accurate posterior estimator, ${\rm ECP} (\hat{p}, \alpha\mathcal{G}) = 1 - \alpha,  \ \forall \alpha \in (0, 1)$ as described in~\cref{sec:formalism}, which would then lead to the diagonal black dashed diagonal line. We see in the first panel that that is indeed the case for the `correct' case, which is accurate by construction. We found consistent results amongst all values of $D$ we tested, going up to $D = 1000$.

The second and third panels show the over and underconfident cases, respectively. We see how these cases lead to different coverage plots than the HPD method. This is not entirely unexpected: 
For underconfident estimators, the TARP regions from randomly selected points are more likely to cover approximately half of the posterior estimator $\alpha \sim 0.5$, while for overconfident estimators, they are likely to cover either very little $\alpha \sim 1$ or a lot $\alpha \sim 0$.
We expand this intuition, including some figures, in~\cref{app:intuition}. Finally, in the fourth panel, we see how the biased case cannot be detected by the HPD region but is detectable by TARP. This shows how, as explained in~\cref{sec:formalism}, ${\rm ECP} (\hat{p}, \alpha) = 1 - \alpha$ does not mean the posterior is accurate for HPD regions, but it does for TARP regions.  

\revised{We also repeated this example for the case of Gaussian distributions with nondiagonal elements in the covariance matrix. To do this, we randomly generated arrays of size $D (D - 1)/2$, we then converted them into lower triangular matrices, which we used as the Cholesky decomposition of the covariance matrix. We found that adding non-zero elements to the covariance matrix did not change our results.}

\subsection{\revised{Dependence on $\theta_r$ distribution and distance metric}}
\label{sec:dependence}

\revised{All the results of the Gaussian Toy Model, shown in~\cref{fig:toy_gaussian}, rely on two choices, specified in~\cref{sec:drp_method}: A distribution to draw reference points $\theta_r$ from, and a distance metric 
 $d(\cdot, \cdot)$. Therefore, it is key to study the dependence of our method on different choices of both things. 
}
\revised{
Firstly, we repeated all four versions of the Gaussian Toy Model experiment, drawing $\theta_r$ from various distributions: 
\begin{itemize}
\item A uniform distribution, both covering the wide range $\theta_r \sim \mathcal{U}(0,1)$, and covering only part of the range $\theta_r \sim \mathcal{U}(0,0.5)$
\item A normal distribution, centered at $\theta = 0.5$, and with standard deviation varying between $0.01$ and $0.1$
\item $\theta_r$ with a fixed value, either at the center of parameter space $\theta_r = 0.5$ or at a different location.
\end{itemize}
}

\revised{
We also repeated our experiments using the Manhattan or L1 distance, instead of L2. We found very similar curves to those shown in~\cref{fig:toy_gaussian} for the correct, overconfident, and underconfident cases. In the biased case, the different $\theta_r$ distributions led to different curves, but all of them clearly showed there was a bias. These figures are shown in~\cref{app:dependence}. Therefore, we conclude that the proposed method is robust to different distributions for $\theta_r$, and choices of distance metric.
}

\subsection{Revealing when estimators are uninformative}\label{sec:posterior-prior}


\begin{figure}[t]
    \centering
    \includegraphics[width=.89\linewidth]{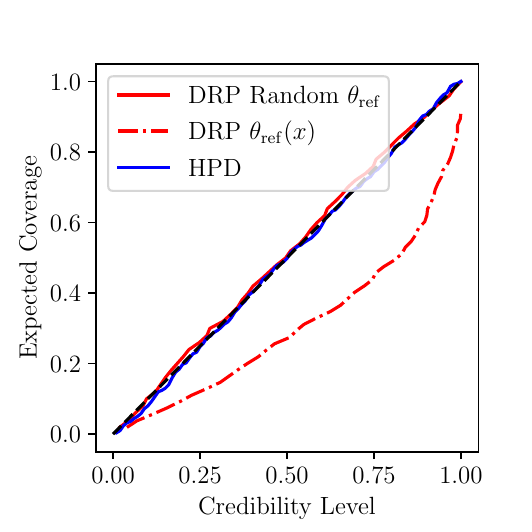}
    \caption{
    Expected coverage vs credibility level for the uninformative posterior estimator described in~\cref{sec:posterior-prior}. The blue line shows the coverage calculated using HPD regions, while the red lines use TARP regions. The continuous line uses reference points that are independent of $x$, while the dot-dashed line uses reference points that depend on $x$.
        \label{fig:uninformative}
    }
\end{figure}

As our second benchmark, we consider the case mentioned before in which the learned posterior estimator is equal to the prior $\hat{p}(\theta | x) = p(\theta)$. The reason why we are interested in this example is that, in that case, the expected coverage probability calculated using HPD regions will be equal to $1 - \alpha$ for any value of $\alpha$, as previously discussed. However, with TARP we have the ability to avoid this blindspot by sampling reference points in a manner dependent on $x$.

To make this concrete, we consider a one-dimensional example with a Gaussian prior $p (\theta) = \mathcal{N}(\theta; \mu_0, \sigma_0^2)$. Our `forward model' in this case is simply generating a number $n_x$ of data points, from $\left\{ x_i \right\}_{i = 1}^{n_x} \sim N(\theta, \revised{\sigma_x}^2)$. In this conjugate model, we can easily derive the true posterior:
\begin{align}
    p\left (\theta | \left\{ x_i \right\}_{i = 1}^{n_x} \right) &= \mathcal{N} (\mu | m, s), \\
s &= \left( {1 \over \sigma_0^2} + {n+x \over \revised{\sigma_x}+x^2} \right)^{-1}, \\ 
m &= s \left( {\mu_0 \over \sigma_0^2} + {\sum_i x_i \over \sigma_x^2} \right)
\end{align}
We fix $n_x = 50$, $\mu_0 = 0$, $\sigma_0 = 1$ and $\sigma_x = 0.1$. We generate $500$ samples from the forward model, and calculate expected coverage from an `uninformative estimator' $\hat{p}(\theta | x) = p(\theta)$ in three ways: 1) using HPD regions, 2) using TARP regions where $\theta_r$ is drawn randomly from \revised{$\mathcal{U}(0, 1)$}, and 3) using TARP regions where $\theta_r = x_0 + u$, where $x_0$ is the first observation, and $u \sim \mathcal{U}(0, 1)$. We expect the first two methods to have ECP equal to $1 - \alpha$, but not for the third.

We show the results in~\cref{fig:uninformative}. First, we notice that when we use HPD regions, we get the correct expected coverage, even though the estimator is wrong (validating the theoretical discussion in \cref{sec:formalism}). This means that, in this case, HPD coverage could fool us into thinking our estimator is accurate when in reality it is completely uninformative. Interestingly, the same happens when we use TARP regions with reference points selected randomly from the prior (red line). This is because, as discussed in~\cref{sec:formalism}, \cref{theorem:main} only holds in both directions when the choice of the region depends on $x$. Finally, as anticipated, the expected coverage is \emph{not} $1 - \alpha$ when the sampling distribution for $\theta_r$ has some $x$-dependence. Therefore, we see how even when we introduce a small dependence on $x$ to $\tilde{p}(\theta_r | x)$ in TARP reveals that the posterior estimator is not accurate. 
\revised{We further explore how the dependence of $\tilde{p}(\theta_r | x)$ on x affects our results in~\cref{app:dependence}.
}

\subsection{Gravitational Lensing}\label{sec:gl}

\begin{figure}[t]
    \centering
    \includegraphics[width=.89\linewidth]{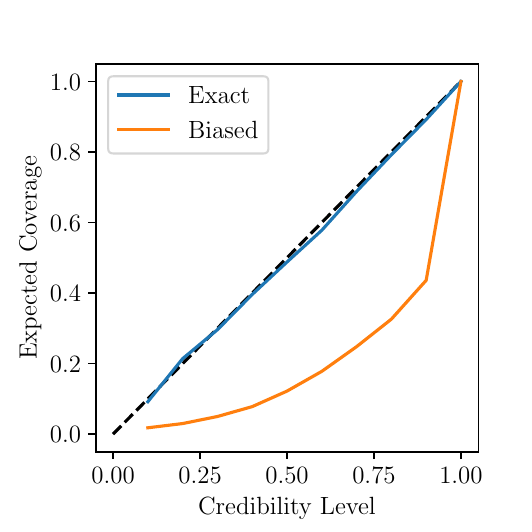}
    \caption{Expected coverage probability vs credibility level for our lensing example\revised{, for which tests based on HPD coverage are intractable}. We see how, as expected, the exact \revised{posterior} estimator (blue) \revised{accurately characterizes the posterior while the biased estimator (orange) does not}.}
    \label{fig:lensing_cov}
\end{figure}

To test our algorithm in a more realistic and high-dimensional setting, we consider a simplified astrophysics problem: gravitational lensing source reconstruction. Gravitational lensing occurs in nature when light rays from a distant galaxy move along curved rather than straight paths due to the mass of another intervening galaxy (the `lens') \citep{treu2010strong}. The result is a highly-distorted, ring-shaped image of the background galaxy. The goal of source reconstruction is to infer from a noisy image what the light from the source galaxy looks like without distortions, assuming the mathematical form of the distortions is perfectly known. \revised{In this high-dimensional setting, coverage checks based on the posterior's HPD region are intractable.} 

The simulator in this scenario samples the source galaxy's light $\theta$ from a multivariate-normal distribution that we fit to a dataset of galaxy images \citep{2019ApJ...882....6S, 2021ApJ...912...41S}. A matrix $A$ encoding the lensing distortions are then applied, and the final observation is produced by adding Gaussian pixel noise of standard deviation $\sigma_n$, so that $x \sim \mathcal{N}(A \theta, \sigma_n^2)$. For computational convenience, we use $16 \times 16$-pixel source images and $32 \times 32$-pixel observations.

As shown in \citet{adam2022posterior} and reviewed in \cref{app:lensing}, posterior samples of $\theta$ can be generated using techniques from diffusion modeling. In general, this approach yields subtly biased posterior samples. However, with our multivariate-normal prior on $\theta$, it is possible to generate unbiased posterior samples. We refer to samples from these as `biased' and `exact' in our results.

\cref{fig:lensing_cov} shows the results for both the exact and the biased posterior estimators, using \revised{$500$} simulations, and $1000$ posterior samples per simulation, and sampling $\theta_r$ from the prior. As expected, our method gets the correct coverage for the exact estimator. It is important to stress that generative models are needed for parameter spaces of this dimensionality (256 parameters), and no previously existing methods could calculate ECPs to assess the accuracy of such models. 
The biased estimator, on the other hand, produces a similar curve to that of the bottom right panel of~\cref{fig:toy_gaussian}, which indicates that it is indeed biased.




\section{Conclusions}\label{sec:conc}

Testing the accuracy of estimated posteriors is a key element of parameter inference. While there exist well-establish \revised{convergence diagnostics for established sampling methods like MCMC}, it is difficult to directly test the accuracy of posterior \revised{inferences, particularly those computed using deep learning methods}. This is the case for both likelihood-based and simulation-based inference. In this paper, we introduced TARP coverage probabilities as a new technique to test the accuracy of estimated posteriors using posterior samples alone, when explicit posterior evaluations are not available. \revised{While our focus is testing posterior estimators based on generative machine learning models, our method could equally well be used to test the correctness of MCMC samples, although potentially at a great computational cost.}

We have shown that this test is sufficient to prove that the inference is accurate, while other similar tests were necessary but not sufficient. \revised{We also tested the impact of the choice of $\hat{p}(\theta_r | x)$ and the distance metric used by the TARP method and found that they do not significantly affect our results. The exception to this is the case where the posterior estimator is equal to the prior, in which case TARP only works if $\hat{p}(\theta_r | x)$ has some dependency on $x$. It is left up to the user of the method to determine whether this is a risk.} 

We applied our test successfully to a variety of inference problems, in particular in cases where alternative methods fail, and showed that it scales well to high-dimensional posteriors. Therefore, we propose TARP coverage probabilities as a tool to test the accuracy of future posterior inference analyses from generative models. 

\section{\revised{Broader Impact}}

\revised{
Our work is focused on checking the correctness of statistical inferences, which is an important open issue. We expect our work to have a positive societal impact by increasing the trustworthiness of machine learning applications to scientific problems across a wide variety of domains. As with any statistical method, however, incorrect application of our method (particularly through poor choice of the sampling distribution for $\theta_\mathrm{ref}$) could lead to invalid conclusions.
}

\bibliographystyle{style/icml2023}
\bibliography{ref}

\newpage
\appendix
\onecolumn
\section{Connection between both definitions}\label{app:connection}

\begin{figure*}
    \centering
    \includegraphics[width=.99\linewidth]{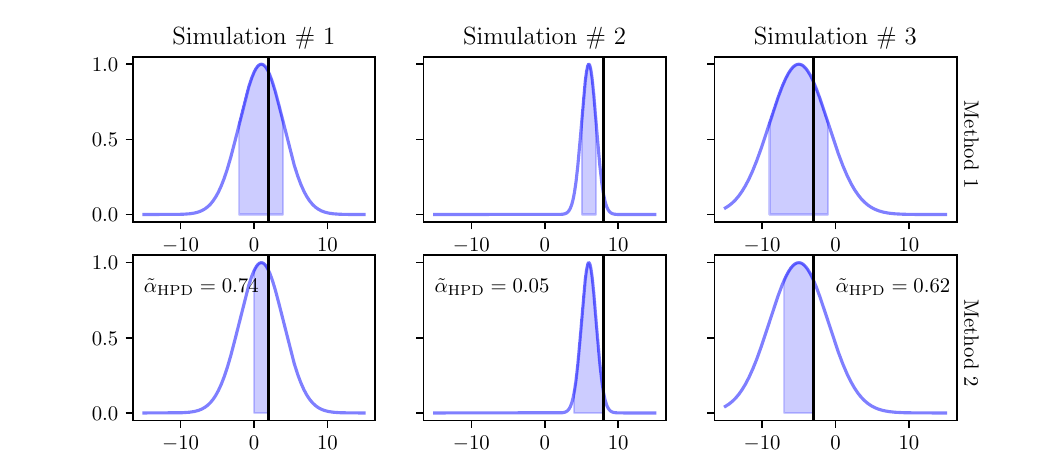}
    \caption{
    This figure illustrates the intuition behind the two ways of calculating high posterior density coverages. Each column shows one of three simulations in a toy example. The blue curves show the predicted posteriors, and the black vertical lines show the corresponding truths. We want to calculate the coverage for the $68 \%$ credibility level ($\alpha = 0.32$). The first approach, shown in the top row, would consist of calculating the $1 - \alpha$ credibility region, then checking how often the truth is in the said region (in this case twice, so the coverage is $2/3$). The alternative approach, described in~\cref{sec:prev_work}, and shown in the bottom row, is to find the HPD region defined by the truth, then find how often $\tilde{\alpha}_{\rm HPD} < \alpha$, which is again twice. The plot illustrates the fact that these two approaches are exactly equivalent. 
    }
    \label{fig:app_hpd}
\end{figure*}

\begin{figure*}
    \centering
    \includegraphics[width=.99\linewidth]{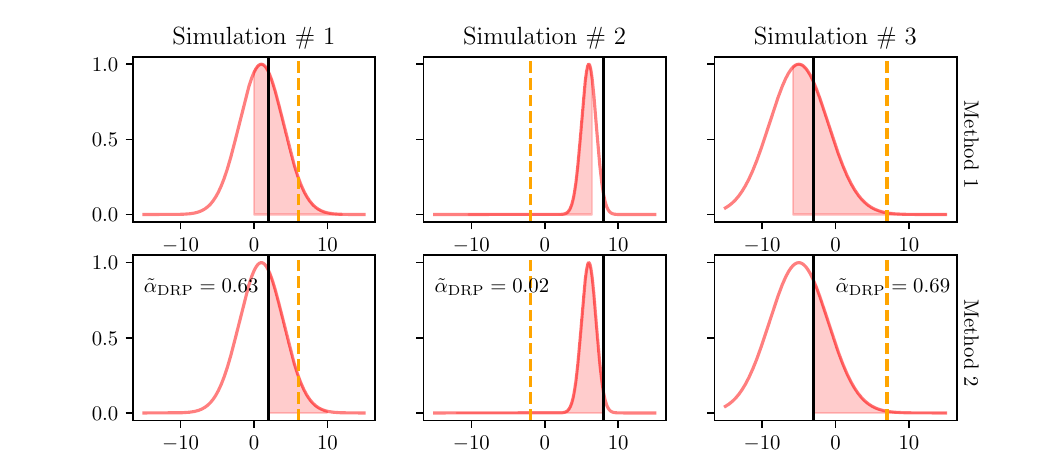}
    \caption{
    Similarly to~\cref{fig:app_hpd}, this figure illustrates the intuition behind the two ways of calculating `distance to random point' coverages. Each column shows one of three simulations in a toy example. The red curves show the predicted posteriors, and the black vertical lines show the corresponding truths. The orange lines show the randomly selected reference points. We want to calculate the coverage for the $68 \%$ credibility level ($\alpha = 0.32$). The first approach, shown in the top row, would consist of finding the $1 - \alpha$ credibility region centered around the reference point, then checking how often the truth is in the said region (in this case twice, so the coverage is $2/3$). The alternative approach, shown in the bottom row, is to find the TARP region defined by the reference and the truth, then find how often $\tilde{\alpha}_{\rm HPD} < \alpha$, which is again twice. The plot illustrates the fact that these two approaches are exactly equivalent. 
    }
    \label{fig:app_drp}
\end{figure*}

\cref{sec:prev_work} discussed the differences between two possible methods for calculating coverage probabilities, both for HPD and TARP regions. We try to build more intuition behind that connection in this appendix. Focusing first on the case of HPD regions, shown in~\cref{fig:app_hpd}. The first method, perhaps more intuitive but far more inefficient, would be to calculate the $1 - \alpha$ credibility region, then check how often the truth is in the said region for each simulation, and for multiple values of alpha (notice the nested loop). The second method, a consequence of the very important~\cref{lemma:hpd}, and already used by~\cref{alg:ecp_hpd} would be to find the HPD region defined by the truth for each simulation, and its corresponding credibility level $1 - \tilde{\alpha}_{\rm HPD}$. We can then calculate the coverage for the $1 - \alpha$ level as $\sum_{i=1}^N \tilde{\alpha}_{\rm HPD} \geq \alpha$, where $N$ is the number of simulations. 

A similar logic applies to TARP credibility regions. While we could find the radius from the reference point, such that $\alpha$ reaches a certain value, it is far more computationally efficient, and equivalent, to use the credibility regions defined by the true values, as shown in~\cref{fig:app_drp}. This is the method used by~\cref{alg:ecp_name}.

\section{Intuition about over and under confident plots}\label{app:intuition}

\begin{figure*}
    \centering
    \includegraphics[width=.99\linewidth]{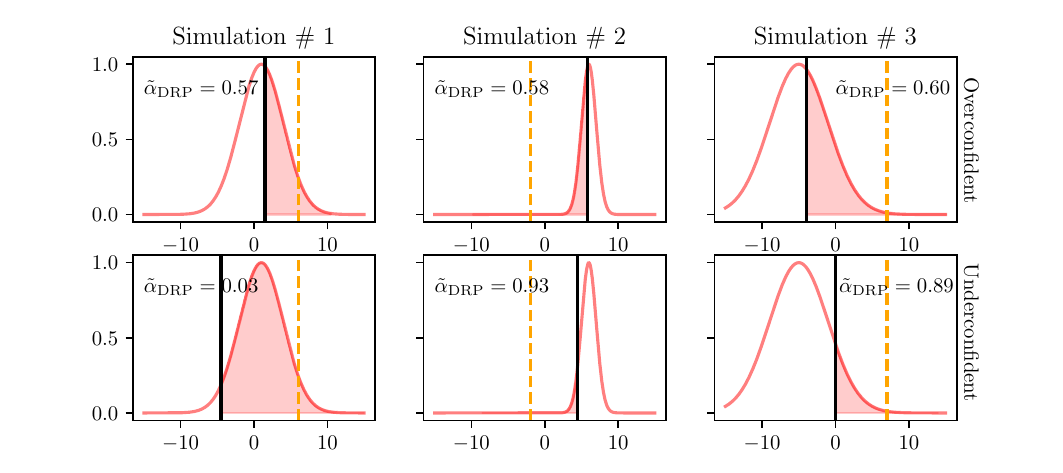}
    \caption{
    This figure illustrates the reason behind the shapes of over and underconfident curves obtained using TARP coverage, such as those in~\cref{fig:toy_gaussian}. The top shows three example simulations with overconfident predictions, while the bottom shows underconfident predictions. The figure shows that TARP coverages tend to be close to $0.5$, while for overconfident regions they tend to be close to either $0$ or $1$.
    }
    \label{fig:app_int}
\end{figure*}

Practitioners used to applying coverage probabilities to validate SBI analysis will be used to seeing over- and underconfident curves, such as those in the blue curves of ~\cref{fig:toy_gaussian}. However, the same figure shows how the TARP method produces different curves for over- and underconfident posterior estimators. The aim of this appendix is to provide some intuition behind these differences.

Firstly, we focus on underconfident posteriors, shown in the top panel of~\cref{fig:app_int}. In this case, we see that the TARP coverage tends to be close to $0.5$. This is because regardless of where the random reference point is, if the truth is close to the peak of the posterior, the TARP area is likely to cover approximately half of the distribution. On the other hand, for overconfident posteriors, shown in the bottom panel of~\cref{fig:app_int}, we see that the TARP coverage tends to be close to either $0$ or $1$. This is because regardless of where the random reference point is, if the truth is far from the peak of the posterior, the TARP area is likely to cover either the whole distribution, or none of it.

\section{Biased case experiment function}\label{app:biased_case}

\begin{figure*}
    \centering
    \includegraphics[width=.39\linewidth]{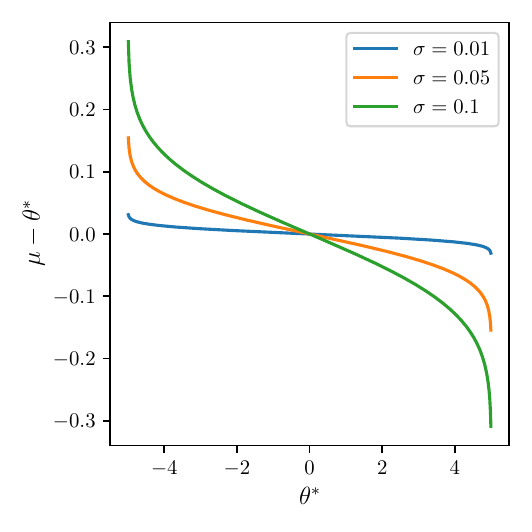}
    \caption{
    The position-dependent function used as the mean in the {\it biased case} in~\cref{sec:toy}, for three different values of sigma.
    }
    \label{fig:biased_case}
\end{figure*}

~\cref{fig:biased_case} shows the function~\cref{eq:biased}, used in~\cref{sec:toy} for the one-dimensional case, as the means of the normal distributions. The function shows how, when $\theta^*$ is zero, the distributions are centered at the correct value, whereas as we move away from zero, the posterior estimator will be increasingly biased. 

\section{Dependence on $\theta_r$ distribution and distance metric}\label{app:dependence}
\begin{figure*}[tbhp]
    \centering
    \includegraphics[width=.99\linewidth]{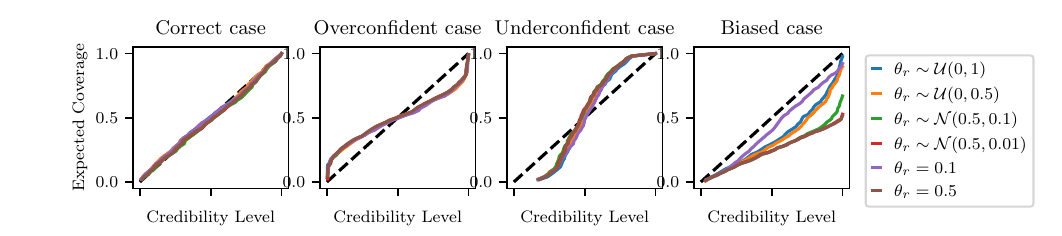}
    \caption{
    The Gaussian Toy Model experiment described in~\cref{sec:toy}, drawing the reference points $\theta_r$ from different distributions, as described in~\cref{sec:dependence}.
    }
    \label{fig:references}
\end{figure*}

\begin{figure*}[tbhp]
    \centering
    \includegraphics[width=.99\linewidth]{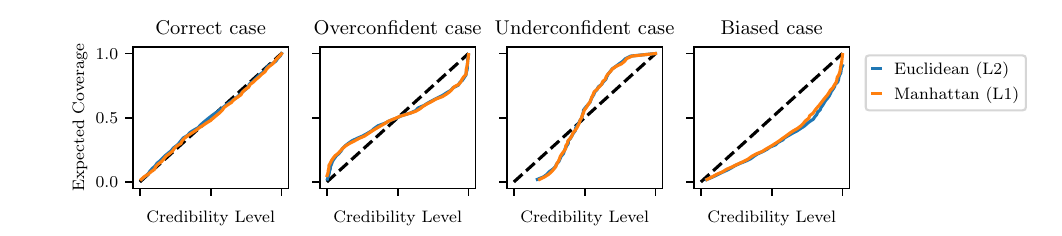}
    \caption{
    The Gaussian Toy Model experiment described in~\cref{sec:toy}, using L1 or L2 distance metrics, as described in~\cref{sec:dependence}.
    }
    \label{fig:distances}
\end{figure*}

\cref{fig:references} shows the same as~\cref{fig:toy_gaussian}, but varying the distribution used to draw $\theta_r$. We find that this only makes a difference in the biased case, but even then there is clear evidence of bias for all distributions. \cref{fig:distances} shows the same, comparing the use of L1 and L2 as distance metrics. We find no appreciable differences in this case. We, therefore, conclude that our method is robust to choices of $\theta_r$ distribution and distance metric.

In~\cref{sec:posterior-prior}, we discussed how when we have a distribution $\tilde{p}(\theta_r | x)$ that has some dependency on $x$, the TARP method reveals an inaccurate posterior estimator, in the case when the posterior estimator is simply recovering the prior.~\cref{fig:prior_references} shows what happens to this experiment for different distributions. We see how the distributions that do not depend on $x$, shown in continuous lines, do not detect the inaccurate posterior estimator as expected. On the other hand, the distributions that depend strongly on $x$, shown as dotted lines, very clearly detect the inaccurate estimator. Finally, we show a distribution with a weaker dependence on $x$, where TARP does lie away from the diagonal line, but much closer than the other $x$-dependent distributions, as expected.

\section{Gravitational lensing experiment details}\label{app:lensing}

\begin{figure*}[tbhp]
    \centering
    \includegraphics[width=.49\linewidth]{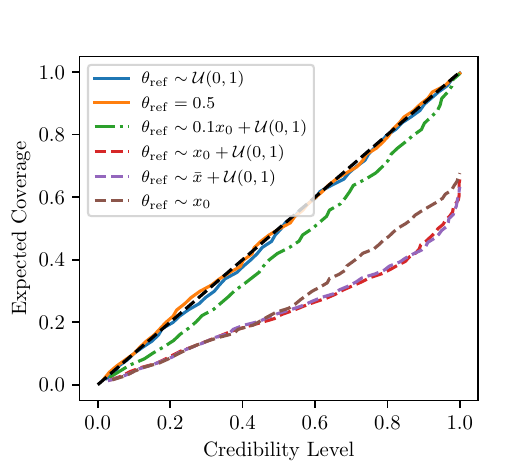}
    \caption{
    The same expected coverage vs credibility level plot shown in~\cref{fig:uninformative}, for different $\tilde{p}(\theta_r | x)$ distributions. The continuous lines show distributions that do not depend on $x$, the dash-dotted line shows a distribution with a weak dependence on $x$, and the dotted lines show a stronger dependence.
    }
    \label{fig:prior_references}
\end{figure*}

As shown in \citet{adam2022posterior}, gravitational lensing source reconstruction can be performed using techniques from score-based modeling. Here we summarize the key ideas behind score-based modeling and how we generate biased and exact posterior samples.

Score-based modeling works by perturbing a training dataset sampled from a prior $p(\theta)$ with noise of increasing scales indexed by $t \in [0, T]$. Here $t = 0$ corresponds to unperturbed data ($p_0(\theta) = p(\theta)$) and $t=T$ corresponds to perturbing the data so much it is buried under noise and follows a Gaussian distribution ($p_T(\theta) = \mathcal{N}(\theta | 0, \sigma_T^2)$). The noising process be described by the stochastic differential equation (SDE) \citep{song2020score}
\begin{equation}
    \dd{\theta}_t = f(t, \theta) \dd{t} + g(t) \dd{w} \, ,
\end{equation}
where $w$ is a standard Wiener process. Using denoising score-matching~\citep{hyvarinen2005estimation, vincent2011connection, song2020score}, a neural network can be trained to approximate the time-dependent prior score $\grad_\theta p_t(\theta)$, where $p_t(\theta)$ is the distribution over data perturbed by the noising process up to time $t$. Given the prior score, samples can be generated by solving the corresponding reverse SDE (RSDE) backward in time, starting with samples from $p_T$:
\begin{equation}
    \dd{\theta} = \left[ f(t, \theta) - g^2(t) \grad_\theta \log p_t(\theta) \right] \dd{t} + g(t) \dd{w} \, ,
\end{equation}
where here $\dd{t}$ is a negative timestep.

For simplicity, instead of fitting a score-based model, we fit a multivariate Gaussian to the PROBES dataset of galaxy images as our prior, giving $p(\theta) = \mathcal{N}(\mu_0, \Sigma_0)$. We use the variance-exploding SDE from \citet{song2020score} as our noise process. The prior at time $t$ is thus $p_t(\theta) = \mathcal{N}(\theta | \mu_0, \Sigma_0 + \sigma_t^2 \mathbb{I})$, where $\sigma_t^2$ is the variance of the noise process at time $t$. This expression can be used to evaluate the prior score analytically.

To modify the sampling procedure to generate samples from $p(\theta | x)$ for some observation $x$, we must condition the score in the RSDE, replacing the prior score with the posterior score:
\begin{equation}
    \dd{\theta} = \left[ f(t, \theta) - g^2(t) \grad_\theta \log p_t(\theta | x) \right] \dd{t} + g(t) \dd{w} \, , \label{eq:cond-rsde}
\end{equation}
By Bayes' rule, the posterior score is
\begin{equation}
    \grad_\theta \log p_t(\theta | x) = \grad_\theta \log p_t(x | \theta) + \grad_\theta \log p_t(\theta) \, ,
\end{equation}
where the first term on the RHS is the score of the likelihood. As pointed out in \citet{adam2022posterior}, this time-dependent likelihood is in general intractable but can be approximated as
\begin{equation}
    \hat{p}_t(x | \theta) = \mathcal{N}(x | A \theta, \sigma_n^2 \mathbb{I} + \sigma_t^2 A A^T) \approx p_t(x | \theta) \, , \label{eq:gl-biased}
\end{equation}
where the matrix $A$ encodes the lensing distortions and $\sigma_n$ is the standard deviation of the noise in the observation (see \cref{sec:gl}). \emph{However}, when $p(\theta)$ is a multivariate Gaussian, the time-dependent likelihood \emph{is tractable}, evaluating to
\begin{equation}
    p_t(x | \theta) = \mathcal{N}(x | A \theta_c(\theta), \sigma_n^2 \mathbb{I} + A \Sigma_c A^T) \, , \label{eq:gl-exact}
\end{equation}
where
\begin{equation}
    \Sigma_c := \left( \Sigma_0^{-1} + \sigma_t^{-2} \mathbb{I} \right)^{-1} \, , \qq{} \theta_c(\theta) := \sigma_t^{-2} \Sigma_c \theta \, .
\end{equation}
We, therefore, have two methods for sampling the posterior for the source galaxy's light: solving the RSDE \cref{eq:cond-rsde} using the exact time-dependent likelihood (\cref{eq:gl-exact}) or the approximate, biased one (\cref{eq:gl-biased}). We refer to these as the `exact' and `biased' samplers respectively.

Finally, we solve both the exact and biased RSDEs by discretizing with the Euler-Maruyama method (see e.g.\, \citet{song2020score}). We find 300 steps are sufficient to ensure convergence.

\end{document}